\documentclass[journal,twoside]{IEEEtran}

\usepackage{cite}

\usepackage{amsmath,amssymb}
\usepackage{graphicx}
\usepackage{booktabs}
\usepackage{multirow}

\usepackage{algorithm}
\usepackage{algpseudocode}

\usepackage{verbatim}
\usepackage{url}

\usepackage{etoolbox}

\makeatletter

\def\@IEEEBIOskipN{2\baselineskip}

\expandafter\patchcmd\csname\string\IEEEbiography\endcsname
  {\vskip \@IEEEBIOskipN plus 1fil minus 0\baselineskip}
  {\vskip \@IEEEBIOskipN}
  {\typeout{IEEE biography spacing patch success}}
  {\typeout{IEEE biography spacing patch failed}}

\makeatother

\newtheorem{appproposition}{Proposition}

\begin{document}


\title{Physics-Driven Independent Pair Generation for Iterative
Self-Supervised Low-Dose CT Denoising}


\author{%
Xianlei~Han,
Shaoyu~Wang,
Jiancheng~Fang,
Weiwen~Wu,~\IEEEmembership{Member,~IEEE,}
and~Qiegen~Liu,~\IEEEmembership{Senior~Member,~IEEE}%
\thanks{This work was supported in part by the National Key Research and Development Program of China under Grant 2023YFF1204300 and Grant 2023YFF1204302. (Xianlei Han and Shaoyu Wang are co-first authors). (Corresponding author: Qiegen Liu).}%
\thanks{Xianlei Han is with the School of Mathematics and Computer
Sciences, Nanchang University, Nanchang 330031, China
(e-mail: 9110124183@email.ncu.edu.cn).}%
\thanks{Shaoyu Wang, Jiancheng Fang, and Qiegen Liu are with the
School of Information Engineering, Nanchang University,
Nanchang 330031, China
(e-mail: wangshaoyu@ncu.edu.cn;
d5283377@163.com;
liuqiegen@ncu.edu.cn).}%
\thanks{Weiwen Wu is with the School of Biomedical Engineering,
Sun Yat-sen University, Shenzhen 510275, Guangdong, China
(e-mail: wuweiw7@mail.sysu.edu.cn).}%
}

\markboth{IEEE Transactions on Circuits and Systems for Video Technology}%
{Han \MakeLowercase{\textit{et al.}}: Physics-Driven Independent Pair Generation and Iteration}

\maketitle

\begin{abstract}

Low-dose computed tomography (LDCT) measurements contain mixed Poisson–Gaussian noise. However, most self-supervised methods rely on generic image statistics and do not explicitly model this noise, which may limit their ability to effectively suppress realistic LDCT noise. To address this issue, we propose a physics-driven framework with cross-domain iteration for self-supervised LDCT denoising. The proposed framework proceeds in three main steps. First, a learned sinogram prior and the LDCT noise model guide posterior inference of photon counts, enabling separation of the Poisson and Gaussian components. Second, the separated Poisson and Gaussian components are respectively processed by binomial thinning and Gaussian data thinning to construct two branches, and residual scaling matches each branch’s noise level to that of the observation, yielding a training pair with approximately independent noise realizations from one low-dose measurement. Finally, the pair is used to train an image-domain network whose forward-projected outputs update the prior. Through cross-domain iteration, the prior and the training pair are progressively refined while maintaining consistency with CT acquisition physics. Experiments on simulated data from AAPM, LIDC-IDRI, and LoDoPaB-CT and on real LDCT data show consistent gains over the evaluated self-supervised baselines across dose levels, with performance comparable to the evaluated supervised baseline.

\end{abstract}

\begin{IEEEkeywords}
Low-dose CT, self-supervised denoising, cross-domain iteration, independent pairs, Poisson--Gaussian noise.
\end{IEEEkeywords}

\section{Introduction}
\label{sec:introduction}

\IEEEPARstart{L}{ow-dose} computed tomography (LDCT) reduces radiation exposure, but increased mixed Poisson--Gaussian noise degrades image quality and diagnostic reliability~\cite{ma2012variance,xie2017robust}. Traditional methods, including BM3D~\cite{dabov2007bm3d}, dictionary learning~\cite{elad2006image}, and total variation~\cite{rudin1992nonlinear}, depend on handcrafted priors and can be computationally demanding. Supervised methods such as RED-CNN~\cite{chen2017redcnn}, CoreDiff~\cite{gao2024corediff}, and PFCM~\cite{hein2025pfcm} provide substantially better denoising, but require paired low-dose and normal-dose data that are difficult to acquire clinically. 

\begin{figure}[!t]
    \centering
    \includegraphics[width=\columnwidth]{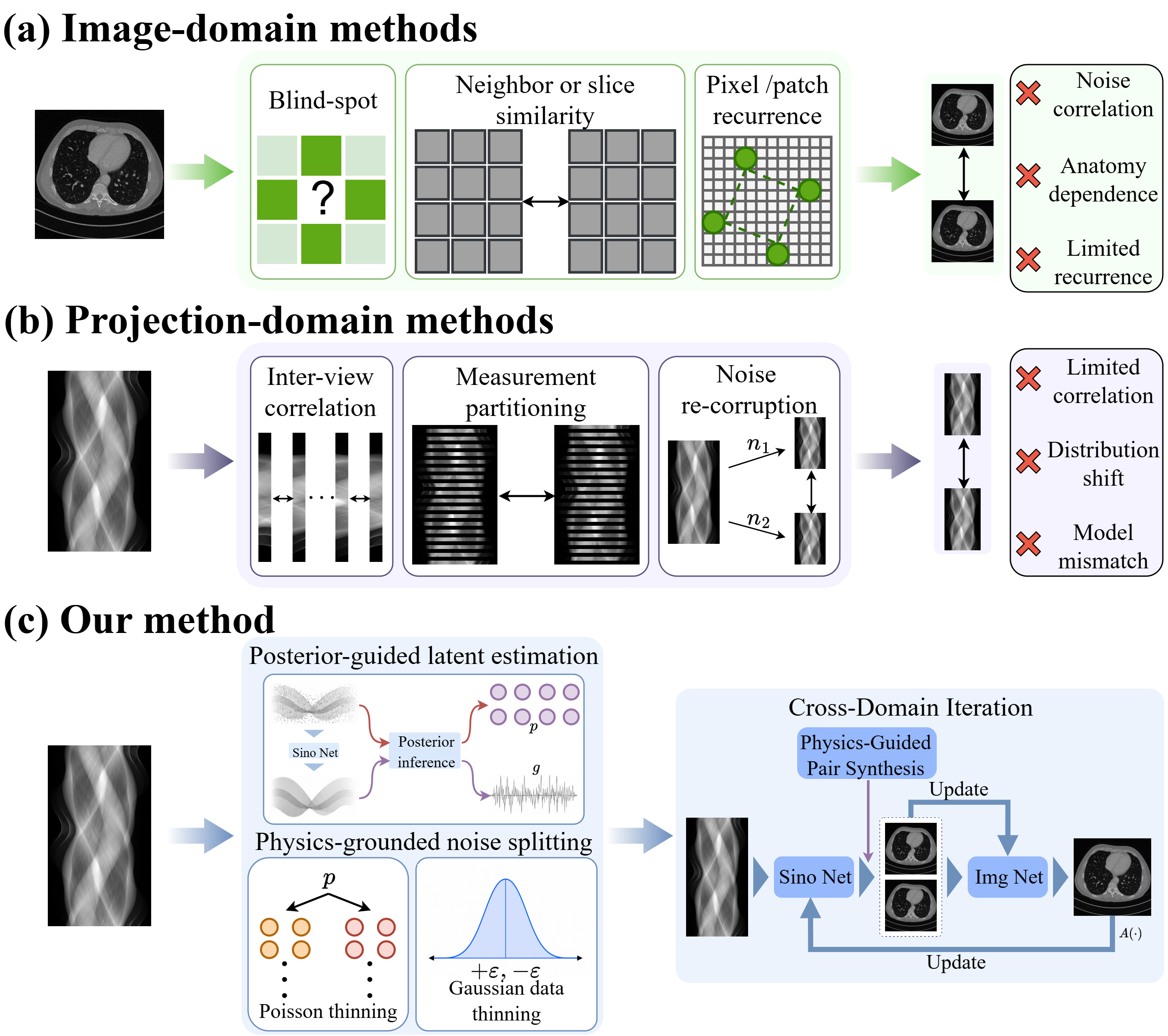}
    \caption{Representative self-supervised methods for LDCT denoising: (a) image-domain methods, (b) projection-domain methods, and (c) the proposed framework.}
    \label{fig:self_supervised_strategies}
\end{figure}

Noise2Noise (N2N)~\cite{lehtinen2018noise2noise} showed that paired noisy observations can replace clean targets under suitable statistical conditions. Existing self-supervised LDCT methods generally operate in the image or projection domain. Image-domain methods use blind-spot prediction, neighborhood subsampling, internal similarity, noise modeling, or aligned pairs~\cite{krull2019noise2void,huang2021neighbor2neighbor,wang2022blind2unblind,niu2023noise2sim,zhao2024wiald2nd,wang2025svb,liu2025sdcnn,wang2026nabsn}. Projection-domain methods use raw measurements through projection splitting, projection denoising, or photon-count splitting~\cite{yuan2020half2half,wu2023usgf,choi2023projection,an2024sdbdnet,wang2025emulating,lu2026left2right}. Fig.~\ref{fig:self_supervised_strategies} summarizes these representative strategies together with the proposed method. 

To satisfy the N2N assumptions, the two branches of a training pair must contain independent noise realizations. Existing methods rarely integrate mixed LDCT noise, acquisition physics, and cross-domain information, limiting their ability to meet these requirements. In the projection domain, photon-count fluctuations and electronic readout noise follow different distributions. Reconstruction converts their mixture into spatially structured image noise, so generic image-domain resampling may not preserve the acquisition statistics. Projection partitioning may also alter the effective dose, causing a noise-statistics mismatch between training and inference. 

To address these issues, we propose a physics-driven iterative framework that constructs training pairs with approximately independent noise. Specifically, a learned sinogram prior supports posterior inference of latent photon counts, enabling separation of the mixed noise components. The separated components are then processed by binomial and Gaussian data thinning, respectively, to form two stochastic branches. Residual scaling further matches each branch’s noise level to that of the observation. Finally, cross-domain iteration refines the prior with image-domain feedback and regenerates more reliable pairs, linking projection-domain pair construction with image-domain restoration.

The main contributions are summarized as follows:
\begin{itemize}
\item A training-pair construction method for mixed Poisson--Gaussian observations is proposed. By inferring latent photon counts and disentangling the noise components, approximately independent training pairs are generated from a single low-dose observation.

\item Cross-domain iteration is introduced to refine the sinogram prior with image-domain feedback. It thus improves pair reliability, and enhances physical consistency between the projection and image domains.

\item The independence of the constructed training pairs and the variance-matching rule are theoretically established under ideal conditions of the Poisson--Gaussian noise model. Extensive experiments validate the effectiveness of the proposed framework across different dose levels and datasets.
\end{itemize}

\section{Related Work}
\label{sec:related_work}

\subsection{Image-Domain Methods}
\label{subsec:image}

Image-domain methods mainly exploit statistics and redundancy in noisy observations. Noise2Void~\cite{krull2019noise2void}, Noise2Self~\cite{batson2019noise2self}, and Blind2Unblind~\cite{wang2022blind2unblind} introduce and advance blind-spot denoising. Neighbor2Neighbor~\cite{huang2021neighbor2neighbor} constructs pairs from neighboring pixels. Recent methods address correlated noise through tunable blind spots or re-noising consistency, including AT-BSN~\cite{chen2024atbsn} and Positive2Negative~\cite{li2025positive2negative}. For LDCT, Noise2Sim~\cite{niu2023noise2sim} uses nonlocal similarity, WIA-LD2ND~\cite{zhao2024wiald2nd} aligns wavelet-domain images, and SVB~\cite{wang2025svb} combines similar pairs with visual blind spots. SDCNN~\cite{liu2025sdcnn} separates signal and two noise components, whereas NA-BSN~\cite{wang2026nabsn} constrains spatially correlated CT noise. However, these methods do not explicitly model CT acquisition physics or noise formation, and thus cannot adequately characterize realistic CT noise. Moreover, without directly using projection measurements or the forward model, their consistency with the CT acquisition process is limited.

\subsection{Projection-Domain Methods}
\label{subsec:measurement_ssl}

Projection-domain methods use CT measurements more directly. Half2Half partitions projections from one acquisition~\cite{yuan2020half2half}. USGF~\cite{wu2023usgf} exploits projection structure, Choi \emph{et al.}~\cite{choi2023projection} denoise projections, and SDBDNet~\cite{an2024sdbdnet} enables dual-domain collaboration. OSDM~\cite{huang2024osdm} introduces one-sample unsupervised diffusion learning, while the low-rank angular prior guided multi-diffusion model extends this strategy to few-shot settings~\cite{zhang2024rapmd}. Wang \emph{et al.}~\cite{wang2025emulating} use binomial thinning of Poisson counts to generate pairs. Left2Right~\cite{lu2026left2right} constructs dual-domain self-supervised LDCT training data. These methods connect self-supervised learning more closely to CT acquisition. However, many still rely on specific partitions, measurement conditions, or simplified noise assumptions, so generated samples may depend on the acquisition partition.

To our knowledge, no existing approach jointly models mixed Poisson--Gaussian noise formation and enforces imaging constraints across the projection and image domains. Both elements are important because pair statistics originate in projection measurements, whereas denoising quality is evaluated after reconstruction. Accordingly, our framework uses the CT noise model for pair construction and cross-domain feedback for acquisition consistency.

\section{Methodology}

The proposed methodology comprises five components: Noise modeling and training-pair motivation (Sec.~\ref{subsec:motivation}), sinogram prior learning (Sec.~\ref{subsec:sino_prior}), posterior-guided pair construction (Sec.~\ref{sec:posterior_compensation}), cross-domain iterative training (Sec.~\ref{subsec:dual_domain}), inference and reconstruction (Sec.~\ref{subsec:inference}).

\subsection{Motivation}
\label{subsec:motivation}

Pair construction must represent mixed projection noise while remaining consistent with the CT imaging model. We describe each LDCT projection measurement as
\begin{equation}
Y=C+\varepsilon,
\label{eq:poisson_gaussian_observation}
\end{equation}
where $Y$ is the low-dose projection measurement, $C\sim\operatorname{Poisson}(\mu)$ is the photon count with mean $\mu$, and $\varepsilon\sim\mathcal{N}(0,\sigma^2)$ denotes Gaussian readout noise with variance $\sigma^2$.

Photon-count fluctuations depend on transmitted intensity, whereas electronic readout noise follows a separate Gaussian process. We therefore construct projection-domain pairs from both sources and their statistics. The CT imaging model provides a complementary physical constraint. Because sinogram-prior errors can propagate to the generated samples, cross-domain iteration forward projects restored images to update the prior for the next pair-construction round. This feedback reduces prior error and preserves acquisition consistency.

\subsection{Sinogram-Domain Prior Learning}
\label{subsec:sino_prior}

\begin{figure*}[htbp]
    \centering
    \includegraphics[width=\textwidth]{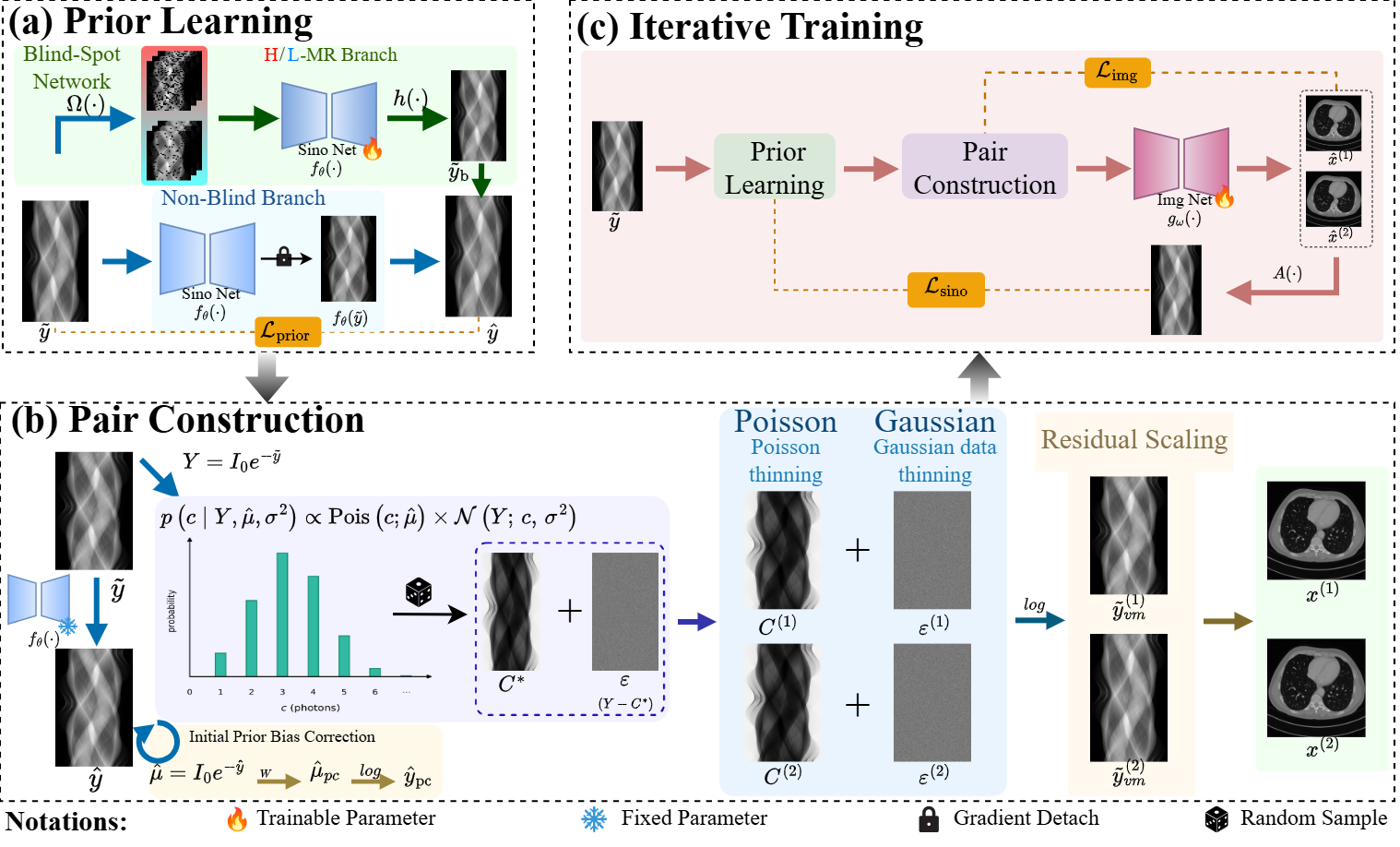}
    \caption{Proposed framework. (a) Sinogram prior learning with high and low masking ratios and a non-blind branch. (b) Posterior-guided Poisson--Gaussian pair construction with residual scaling for variance matching. (c) Cross-domain iterative training.}
    \label{fig:framework}
\end{figure*}

Fig.~\ref{fig:framework}(a) illustrates the learning process of the sinogram prior. Given a noisy sinogram $\tilde{y}$, $\Omega(\cdot)$ partitions its pixels into $K$ mutually exclusive masks, $\{M_m\}_{m=1}^{K}$. For each mask, the masked locations are filled with weighted estimates from neighboring pixels, and the resulting masked views are concatenated and processed by $f_{\theta}$. The mapping operator $h(\cdot)$ then restores the blind-spot predictions to their original coordinates: $\hat{y}_{\mathrm{b}}=h(f_{\theta}(\Omega(\tilde{y})))$. Thus, each sinogram pixel is predicted once per forward pass, enabling more effective use of contextual information and improving training efficiency.

To compensate for information excluded by blind-spot masking, we couple the masked branch with the non-blind branch using the following objective~\cite{wang2022blind2unblind}:
\begin{equation}
\underbrace{\left\| \hat{y}_{\mathrm{b}} + \lambda_{\mathrm{vis}} \operatorname{sg}[{f}_\theta(\tilde{y})] - (\lambda_{\mathrm{vis}} + 1) \tilde{y} \right\|_2^2}_{\text{Re-visibility Loss}} 
+ \underbrace{\eta \cdot \left\| \hat{y}_{\mathrm{b}} - \tilde{y} \right\|_2^2}_{\text{Regularizer}}
\label{eq:visible_blind_spot_loss}
\end{equation}
Here, $\operatorname{sg}[\cdot]$ denotes the stop-gradient operation and is treated as a constant during optimization. The parameters $\lambda_{\mathrm{vis}}$ and $\eta$ control the contribution of the visible branch and the strength of the regularization, respectively. The loss indirectly optimizes the visible-branch output through the blind-spot prediction $\hat{y}_{\mathrm{b}}$. As $\lambda_{\mathrm{vis}}$ increases during training, the network gradually transitions from blind-spot constraints to non-blind inputs, making better use of visible information while preserving details. The regularizer constrains $\hat{y}_{\mathrm{b}}$ and limits blind-spot error propagation into the non-blind branch.

Regarding the masking ratio, a high ratio limits context and denoising capacity, whereas a low ratio weakens constraint and slows convergence. We therefore use high and low masking ratios $\rho_h$ and $\rho_l$. Inputs $\Omega_h(\tilde{y})$ and $\Omega_l(\tilde{y})$ produce predictions $\hat{y}_{\mathrm{b}}^{(h)}$ and $\hat{y}_{\mathrm{b}}^{(l)}$, with losses $\mathcal{L}_h$ and $\mathcal{L}_l$ from Eq.~\eqref{eq:visible_blind_spot_loss}. The two branches are combined with the following training objective:
\begin{equation}
\mathcal{L}_{\mathrm{prior}}
=
\left[1-\beta\right]\mathcal{L}_{h}
+
\beta\mathcal{L}_{l},
\label{eq:prior_loss}
\end{equation}
The schedule for $\beta$ shifts emphasis from the stronger blind-spot constraint of the high-ratio branch to the richer context of the low-ratio branch. Early optimization emphasizes robust masked prediction, while later optimization favors detail recovery. The sinogram prior is
\begin{equation}
\hat{y}=f_{\theta}(\tilde{y}).
\label{eq:prior_generate}
\end{equation}

\subsection{Posterior-Guided Pair Construction}
\label{sec:posterior_compensation}

Fig.~\ref{fig:framework}(b) illustrates the construction of two branches from one low-dose observation and the sinogram prior $\hat{y}$. The procedure includes photon-count posterior inference, prior correction, Poisson--Gaussian component decomposition, branch generation, residual scaling, and image-domain reconstruction.

According to the Beer--Lambert law, for an ideal line-integral sinogram $y$, the expected photon count is given by
\begin{equation}
\mu = I_0\exp(-y),
\label{eq:photon_count}
\end{equation}
where $I_0$ is the unattenuated incident photon count. Because $Y$ combines the latent photon count and Gaussian readout noise in Eq.~\eqref{eq:poisson_gaussian_observation}, neither component is observed directly. The resulting noisy sinogram is $\tilde{y} = -\log\left(\frac{Y}{I_0}\right)$.

To infer the latent photon count, we substitute $\hat{y}$ into Eq.~\eqref{eq:photon_count} to obtain the photon-count prior $\hat{\mu}$. Combining this prior with the noise variance $\sigma^2$, Bayes' rule gives the posterior distribution of $C$ as
\begin{equation}
p\left(c\mid Y,\hat{\mu},\sigma^{2}\right)
\propto
\frac{\hat{\mu}^{c}\exp(-\hat{\mu})}{c!}
\exp\left[
-\frac{(Y-c)^2}{2\sigma^{2}}
\right].
\end{equation}
Here, $c$ is a possible realization of $C$. The two factors are the Poisson prior and Gaussian readout noise likelihood.

In practice, the posterior distribution is evaluated over a finite integer support $\mathcal{C}$ in the log domain. The normalized posterior probability is written as
\begin{equation}
\begin{aligned}
\pi(c)
&=
\frac{\exp[\ell(c)]}
{\sum_{c'\in\mathcal{C}}\exp[\ell(c')]},\\
\ell(c)
&=
c\log\hat{\mu}-\hat{\mu}-\log\Gamma(c+1)
-\frac{(Y-c)^2}{2\sigma^2}.
\end{aligned}
\label{eq:log_unnormalized_posterior}
\end{equation}
Here, $\mathcal{C}$ is the finite candidate set, $c'$ is a summation index, and $\Gamma(\cdot)$ is the Gamma function. The posterior mean $\bar{c}$ and variance $v_c$ summarize the count estimate and uncertainty.

Residual noise or bias in $\hat{\mu}$ can distort branch statistics during posterior sampling. We therefore use posterior uncertainty to decide whether to retain the network prior or move it toward the posterior estimate. Their normalized discrepancy is
\begin{equation}
\chi=
\frac{
\operatorname{Avg}\left[(\bar{c}-\hat{\mu})^2\right]
}{
\operatorname{Avg}\left[(v_c+\bar{c}+\sigma^2)/N_{\mathrm{eff}}\right]
}.
\label{eq:posterior_consistency_ratio}
\end{equation}
Here, $\operatorname{Avg}(\cdot)$ is a local average and $N_{\mathrm{eff}}$ is the effective sample count. The adaptive weight and corrected prior are
\begin{equation}
W=\left(1-\chi^{-1}\right)_+,
\qquad
\mu_{\mathrm{pc}}=\hat{\mu}+W(\bar{c}-\hat{\mu}).
\label{eq:corrected_mu}
\end{equation}
Here, $(z)_+=\max(z,0)$. The weight approaches zero when the network prior is statistically consistent with the posterior. Conversely, a larger discrepancy increases $W$ and moves the photon-count prior toward $\bar{c}$. This correction suppresses unreliable network estimates while retaining those consistent with the posterior statistics. The corrected sinogram prior is
\begin{equation}
\hat{y}_{\mathrm{pc}}
=
-\log\left(\frac{\mu_{\mathrm{pc}}}{I_0}\right).
\label{eq:corrected_sinogram_prior}
\end{equation}

Posterior recomputation with $\mu_{\mathrm{pc}}$ gives
\begin{equation}
\begin{gathered}
C^*\sim p(c\mid Y,\mu_{\mathrm{pc}},\sigma^2),\quad
\varepsilon^*=Y-C^*.
\end{gathered}
\end{equation}
We then apply distribution-specific data thinning to the inferred
Poisson and Gaussian components~\cite{neufeld2024data}. For the Poisson component, binomial thinning produces two count branches:
\begin{equation}
C^{(1)}\mid C^*
\sim
\operatorname{Binomial}(C^*,\alpha),
\qquad
C^{(2)}=C^*-C^{(1)},
\end{equation}
where $\alpha\in(0,1)$ is the thinning ratio. Under the ideal Poisson model and exact posterior sampling, this procedure yields branches that are independent conditional on $y$, with effective dose fractions $\alpha$ and $1-\alpha$.

For the Gaussian component, we draw $Z\sim\mathcal{N}(0,\sigma^2)$ independently of $\varepsilon$. The two branches are
\begin{equation}
\begin{aligned}
\varepsilon^{(1)}
=
\alpha\varepsilon^*
+\tau Z,\quad
\varepsilon^{(2)}
=
(1-\alpha)\varepsilon^*
-\tau Z,
\end{aligned}
\end{equation}
where $\tau$ controls inter-branch covariance. We set $\tau=\sqrt{\alpha(1-\alpha)}$, making each Gaussian variance proportional to its effective dose fraction and giving zero cross-covariance under the ideal Gaussian model. Together with binomial thinning, it yields mixed-observation branches that are independent conditional on $y$ under the exact Poisson--Gaussian model and exact posterior sampling. 

The two components are subsequently recombined to form
\begin{equation}
\begin{aligned}
Y^{(k)}
=
C^{(k)}+\varepsilon^{(k)},\quad
\tilde{y}^{(k)}
=
-\log\left(
\frac{Y^{(k)}}{d_kI_0}
\right),
\end{aligned}
\label{eq:phy_sinogram_branch}
\end{equation}
where $k\in\{1,2\}$, and $d_1=\alpha$, $d_2=1-\alpha$ denote the effective dose fractions.

A first-order approximation of the logarithmic transform gives the conditional noise variance of branch $k$ as
\begin{equation}
\operatorname{Var}
\left(
\tilde{y}^{(k)}\mid y
\right)
\approx
\frac{1}{d_k}
\operatorname{Var}
\left(
\tilde{y}\mid y
\right).
\end{equation}

The logarithmic transform increases a branch's conditional noise variance by a factor of approximately $1/d_k$. Reconstruction would therefore train the image-domain network on noisier inputs than those encountered during inference. We avoid this distribution shift by scaling each branch residual around $\hat{y}_{\mathrm{pc}}$ to match the observation variance:
\begin{equation}
\tilde{y}_{\mathrm{vm}}^{(k)}
=
\hat{y}_{\mathrm{pc}}
+
\alpha_k^{\mathrm{var}}
\left(
\tilde{y}^{(k)}-\hat{y}_{\mathrm{pc}}
\right).
\label{eq:branch_sinogram_correction}
\end{equation}
Here, $\alpha_k^{\mathrm{var}}$ is the variance-matching coefficient and $\tilde{y}_{\mathrm{vm}}^{(k)}$ is the scaled sinogram. Matching the conditional branch variance to that of the observation gives $\alpha_k^{\mathrm{var}}=\sqrt{d_k}$. Only the residual amplitude changes, preserving stochastic branch generation. The scaled branches retain separate random components at the noise level used for inference. The Appendix derives the variance-matching coefficient, establishes the conditional independence of the split branches under the stated ideal assumptions, and further analyzes the effects of practical posterior and prior errors.

We reconstruct the scaled sinograms in the image domain:
\begin{equation}
x^{(k)}
=
A^{-1}
\left(
\tilde{y}_{\mathrm{vm}}^{(k)}
\right),
\label{eq:recon_branch}
\end{equation}
where $A^{-1}(\cdot)$ denotes the CT reconstruction operator and $x^{(1)}$ and $x^{(2)}$ are the reconstructed branch images. The resulting pair is used for image-domain self-supervised training.

\subsection{Cross-Domain Iterative Training}
\label{subsec:dual_domain}

Fig.~\ref{fig:framework}(c) illustrates the training process of cross-domain iterative refinement. The corrected prior from Sec.~\ref{sec:posterior_compensation} initializes the process: $\hat{y}_{0}=\hat{y}_{\mathrm{pc}}$.

The generator $\mathcal{G}$ maps $\hat{y}_t$ to two stochastic images:
\begin{equation}
    \left(x_{t}^{(1)},x_{t}^{(2)}\right)
    =
    \mathcal{G}\!\left(\hat{y}_{t},\xi_{t}\right),
    \label{eq:sample_generation}
\end{equation}
where $\xi_t$ represents randomness in posterior sampling and pair construction. The image-domain network $g_{\omega}$ is optimized by
\begin{equation}
\begin{gathered}
\mathcal{L}_{\mathrm{img}}
={}
\left\|
    g_{\omega}\!\left(x_{t}^{(1)}\right)
    -
    x_{t}^{(2)}
\right\|_{2}^{2}
+
\left\|
    g_{\omega}\!\left(x_{t}^{(2)}\right)
    -
    x_{t}^{(1)}
\right\|_{2}^{2},\\
\omega_{t}
\approx
\arg\min_{\omega}
\mathcal{L}_{\mathrm{img}}.
\end{gathered}
\label{eq:image_update}
\end{equation}

The restored estimates are averaged and forward-projected:
\begin{equation}
y_{t+1}^{\mathrm{img}} = A\left[\big(g_{\omega_{t}}\!\left(x_{t}^{(1)}\right)+g_{\omega_{t}}\!\left(x_{t}^{(2)}\right)\big)/2\right], \label{eq:image_reprojection}
\end{equation}
where $A(\cdot)$ is the forward projector. The reference $y_{t+1}^{\mathrm{img}}$ from the restored images enters the sinogram update through
\begin{equation}
\begin{gathered}
\mathcal{L}_{\mathrm{sino}}
=
\mathcal{L}_{\mathrm{prior}}
+
\lambda_{\mathrm{iter}}
\left\|
f_{\theta}(\tilde{y})
-
y_{t+1}^{\mathrm{img}}
\right\|_{2}^{2},
\\
\theta_{t+1}
\approx
\arg\min_{\theta}
\mathcal{L}_{\mathrm{sino}}.
\end{gathered}
\label{eq:sinogram_update}
\end{equation}
Here, $\lambda_{\mathrm{iter}}$ weights cross-domain consistency and $\theta_{t+1}$ denotes the updated parameters. The prior for the next outer iteration is defined as $\hat{y}_{t+1}=f_{\theta_{t+1}}(\tilde{y})$.

The prior conditions the image-domain pair distribution, and reprojected outputs provide a target for updating it. Each outer iteration transfers information from the projection domain to the image domain and back, reducing prior error and promoting cross-domain consistency.

\begin{figure}[t]
    \centering
    \includegraphics[width=\columnwidth]{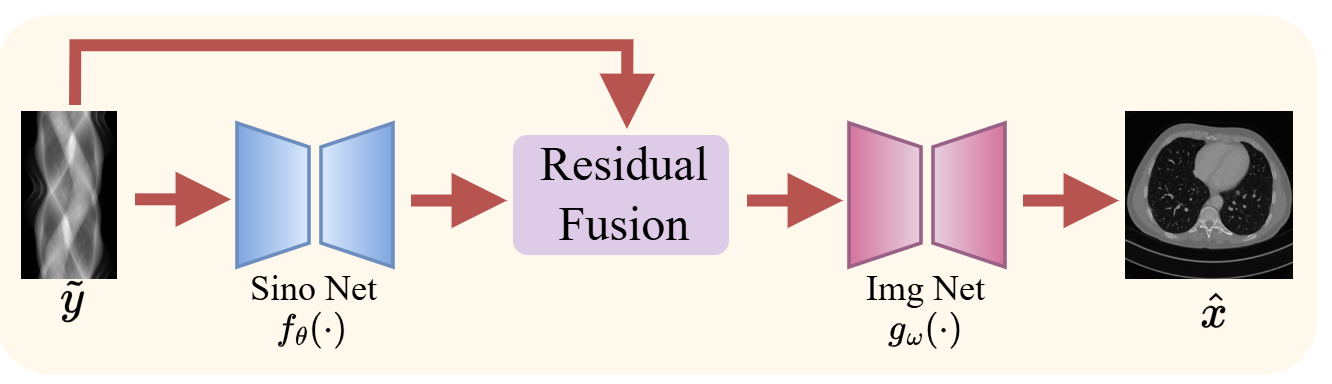}
    \caption{Inference pipeline with sinogram residual fusion and image refinement.}
    \label{fig:inference}
\end{figure}

\subsection{Inference and Reconstruction}
\label{subsec:inference}

At inference, the optimized sinogram network $f_{\theta^*}$ first estimates a prior from $\tilde{y}$, as shown in Fig.~\ref{fig:inference}:
\begin{equation}
\hat{y}=f_{\theta^*}(\tilde{y}).
\label{eq:inference_prior}
\end{equation}
The denoised prior may attenuate weak structures together with noise. We retain measurement information by fusing the prior with the observation residual:
\begin{equation}
\tilde{y}_{\mathrm{inf}}
=
\hat{y}
+
\alpha_{\mathrm{inf}}
\left(\tilde{y}-\hat{y}\right),
\label{eq:inference_fusion}
\end{equation}
where $\alpha_{\mathrm{inf}}$ balances noise suppression and detail preservation. The residual term reintroduces information attenuated by the sinogram network, while the prior provides the primary denoised estimate. The optimized image-domain network $g_{\omega^*}$ produces the final CT image $\hat{x}$ from the fused sinogram:
\begin{equation}
\hat{x}
=
g_{\omega^*}
\left(
A^{-1}\left(\tilde{y}_{\mathrm{inf}}\right)
\right).
\label{eq:inference_output}
\end{equation}

\begin{algorithm}[t]
\caption{Training and Inference Procedure}
\label{alg:framework}

\begin{algorithmic}[1]

\Statex \hspace{-\labelwidth}\hspace{-\labelsep}
\textbf{Input:}\ Noisy sinogram $\tilde{y}$, incident count $I_0$, readout variance

\Statex \hspace{-\labelwidth}\hspace{-\labelsep}
\phantom{\textbf{Input:}}\ $\sigma^2$, and outer iteration count $T$

\Statex \hspace{-\labelwidth}\hspace{-\labelsep}
\textbf{Output:}\ Denoised CT image $\hat{x}$

\Statex \hspace{-\labelwidth}\hspace{-\labelsep}
\textbf{Training Phase:}

\For{$t=1,\ldots,T$}

    \State Update $f_\theta$ using Eq.~\eqref{eq:prior_loss} if $t=1$,
    otherwise Eq.~\eqref{eq:sinogram_update}

    \State Correct the current sinogram prior using
    Eqs.~\eqref{eq:prior_generate} and
    \eqref{eq:log_unnormalized_posterior}--\eqref{eq:corrected_sinogram_prior}

    \For{each mini-batch}
        \State Resample $(x_t^{(1)},x_t^{(2)})$
        using Eq.~\eqref{eq:sample_generation}
        \State Update $g_\omega$ using Eq.~\eqref{eq:image_update}
    \EndFor

    \State Reproject the restored images using
    Eq.~\eqref{eq:image_reprojection}

\EndFor

\Statex \hspace{-\labelwidth}\hspace{-\labelsep}
\textbf{Inference Phase:}

\State Obtain $\tilde{y}_{\mathrm{inf}}$ using
Eqs.~\eqref{eq:inference_prior} and \eqref{eq:inference_fusion}

\State Obtain $\hat{x}$ using Eq.~\eqref{eq:inference_output}

\end{algorithmic}
\end{algorithm}

\section{Experiments}

The experimental evaluation consists of three parts: Experimental settings (Sec.~\ref{subsec:experimental_settings}), quantitative and qualitative comparisons on simulated and real low-dose CT data (Sec.~\ref{subsec:experimental_results}), and ablation analyses of the main components (Sec.~\ref{subsec:ablation_study}). 

\begin{table*}[htb]
    \centering
    \caption{Quantitative results on simulated AAPM, LIDC-IDRI, and LoDoPaB-CT data at dose levels of 0.5\% and 1.0\%.}
    \label{tab:sim_results}

    \setlength{\tabcolsep}{3pt}
    \renewcommand{\arraystretch}{1.05}

    \resizebox{\textwidth}{!}{
    \begin{tabular}{lcccccccccc}
        \toprule
        \multirow[c]{3}{*}[-1.6ex]{Type} &
        \multirow[c]{3}{*}[-1.6ex]{Method} &
        \multicolumn{9}{c}{$I_0=5{\times}10^{3}$, 0.5\% dose} \\
        \cmidrule(lr){3-11}
        & &
        \multicolumn{3}{c}{AAPM} &
        \multicolumn{3}{c}{LIDC-IDRI} &
        \multicolumn{3}{c}{LoDoPaB-CT} \\
        \cmidrule(lr){3-5} \cmidrule(lr){6-8} \cmidrule(lr){9-11}
        & &
        PSNR(dB) $\uparrow$ & SSIM(\%) $\uparrow$ & RMSE $\downarrow$ &
        PSNR(dB) $\uparrow$ & SSIM(\%) $\uparrow$ & RMSE $\downarrow$ &
        PSNR(dB) $\uparrow$ & SSIM(\%) $\uparrow$ & RMSE $\downarrow$ \\
        \midrule

        - & FBP
        & 24.95$\pm$1.09 & 49.77$\pm$2.91 & 57.73$\pm$7.02
        & 29.62$\pm$1.93 & 58.03$\pm$6.85 & 33.22$\pm$7.10
        & 28.15$\pm$2.33 & 60.85$\pm9.58$ & 42.23$\pm$11.24 \\

        Trad. & BM3D
        & 36.05$\pm$1.14 & 94.08$\pm$0.57 & 16.16$\pm$2.13
        & 37.68$\pm$1.93 & 94.70$\pm$1.30 & 14.15$\pm$2.94
        & 34.64$\pm$1.47 & 91.42$\pm$2.10 & 19.48$\pm$3.23 \\

        \multirow{6}{*}{Self.}
        & Noise2Void
        & 34.21$\pm$1.71 & 93.69$\pm$0.48 & 20.75$\pm$4.10
        & 35.59$\pm$1.97 & 93.86$\pm$1.41 & 18.10$\pm$3.77
        & 33.36$\pm$1.76 & 90.63$\pm$1.92 & 23.25$\pm$4.47 \\

        & Noise2Noise
        & 34.72$\pm$2.09 & 94.46$\pm$0.51 & 20.17$\pm$4.79
        & 36.56$\pm$2.90 & 95.57$\pm$1.08 & 17.50$\pm$5.69
        & 33.89$\pm$2.56 & 92.33$\pm$1.71 & 23.09$\pm$6.63 \\

        & Blind2Unblind
        & 36.36$\pm$1.12 & 94.13$\pm$0.44 & 15.64$\pm$2.01
        & 38.05$\pm$2.15 & 96.10$\pm$0.92 & 14.13$\pm$3.35
        & 35.21$\pm$2.04 & 92.77$\pm$1.74 & 19.37$\pm$4.39 \\

        & Noise2Sim
        & 37.82$\pm$1.22 & 95.40$\pm$0.42 & 13.23$\pm$1.86
        & 37.19$\pm$2.53 & 95.88$\pm$1.21 & 15.62$\pm$4.30
        & 34.10$\pm$2.34 & 92.55$\pm$1.92 & 22.12$\pm$5.79 \\

        & N2N-BS
        & 37.91$\pm$1.01 & 93.25$\pm$0.61 & 13.00$\pm$1.49
        & 40.20$\pm$1.87 & 95.01$\pm$1.31 & 10.41$\pm$2.12
        & 36.22$\pm$1.55 & 92.60$\pm$2.10 & 16.28$\pm$2.82 \\

        & Ours
        & \textbf{38.39$\pm$1.09} & \textbf{95.59$\pm$0.64} & \textbf{12.32$\pm$1.54}
        & \textbf{40.89$\pm$1.76} & \textbf{96.39$\pm$1.01} & \textbf{9.55$\pm$1.86}
        & \textbf{36.79$\pm$1.54} & \textbf{93.69$\pm$1.82} & \textbf{15.29$\pm$2.66} \\

        Sup. & RED-CNN
        & 37.77$\pm$1.22 & 95.26$\pm$0.45 & 13.27$\pm$1.87
        & 39.10$\pm$1.82 & 95.16$\pm$1.25 & 11.70$\pm$2.38
        & 35.98$\pm$1.57 & 92.21$\pm$1.99 & 16.73$\pm$2.98 \\

        \midrule

        \multirow[c]{3}{*}[-1.6ex]{Type} &
        \multirow[c]{3}{*}[-1.6ex]{Method} &
        \multicolumn{9}{c}{$I_0=1{\times}10^{4}$, 1.0\% dose} \\
        \cmidrule(lr){3-11}
        & &
        \multicolumn{3}{c}{AAPM} &
        \multicolumn{3}{c}{LIDC-IDRI} &
        \multicolumn{3}{c}{LoDoPaB-CT} \\
        \cmidrule(lr){3-5} \cmidrule(lr){6-8} \cmidrule(lr){9-11}
        & &
        PSNR(dB) $\uparrow$ & SSIM(\%) $\uparrow$ & RMSE $\downarrow$ &
        PSNR(dB) $\uparrow$ & SSIM(\%) $\uparrow$ & RMSE $\downarrow$ &
        PSNR(dB) $\uparrow$ & SSIM(\%) $\uparrow$ & RMSE $\downarrow$ \\
        \midrule

        - & FBP
        & 28.09$\pm$1.09 & 61.89$\pm$2.94 & 40.18$\pm$4.89
        & 30.67$\pm$1.77 & 62.59$\pm$5.20 & 30.20$\pm$5.82
        & 31.22$\pm$2.26 & 72.24$\pm$8.40 & 29.51$\pm$7.64 \\

        Trad. & BM3D
        & 37.16$\pm$1.40 & 94.70$\pm$0.73 & 14.29$\pm$2.31
        & 38.12$\pm$2.14 & 94.86$\pm$1.25 & 13.64$\pm$3.11
        & 36.05$\pm$1.50 & 92.12$\pm$2.10 & 16.55$\pm$2.76 \\

        \multirow{6}{*}{Self.}
        & Noise2Void
        & 35.76$\pm$1.41 & 93.58$\pm$0.54 & 17.04$\pm$2.69
        & 36.87$\pm$2.32 & 95.67$\pm$1.07 & 15.57$\pm$4.01
        & 34.60$\pm$1.87 & 92.85$\pm$1.68 & 20.10$\pm$4.15 \\

        & Noise2Noise
        & 36.72$\pm$1.66 & 95.31$\pm$0.40 & 15.76$\pm$3.00
        & 38.31$\pm$2.70 & 96.38$\pm$0.90 & 14.26$\pm$4.23
        & 35.43$\pm$2.60 & 93.59$\pm$1.51 & 19.55$\pm$5.70 \\
        
        & Blind2Unblind
        & 37.66$\pm$1.38 & 95.66$\pm$0.38 & 13.96$\pm$2.19
        & 39.07$\pm$1.95 & 96.72$\pm$0.84 & 12.47$\pm$2.63
        & 36.30$\pm$2.09 & 93.87$\pm$1.63 & 17.14$\pm$3.99 \\

        & Noise2Sim
        & 38.51$\pm$1.32 & 95.64$\pm$0.45 & 12.29$\pm$1.88
        & 38.75$\pm$2.38 & 96.56$\pm$1.07 & 12.97$\pm$3.31
        & 35.22$\pm$2.41 & 93.54$\pm$1.79 & 19.48$\pm$5.20 \\
        
        & N2N-BS
        & 39.17$\pm$0.93 & 94.20$\pm$0.70 & 11.18$\pm$1.19
        & 42.21$\pm$1.86 & 96.67$\pm$1.01 & 8.22$\pm$1.67
        & 37.73$\pm$1.61 & 94.12$\pm$1.88 & 13.67$\pm$2.44 \\

        & Ours
        & \textbf{39.79$\pm$1.01} & \textbf{95.72$\pm$0.55} & \textbf{10.44$\pm$1.21}
        & \textbf{42.62$\pm$1.81} & \textbf{97.32$\pm$0.81} & \textbf{7.84$\pm$1.54}
        & \textbf{38.30$\pm$1.63} & \textbf{94.77$\pm$1.65} & \textbf{12.77$\pm$2.33} \\

        Sup. & RED-CNN
        & 39.24$\pm$1.14 & 95.91$\pm$0.38 & 11.19$\pm$1.47
        & 40.65$\pm$1.87 & 96.06$\pm$1.11 & 9.80$\pm$2.02
        & 37.47$\pm$1.61 & 93.61$\pm$1.78 & 14.09$\pm$2.63 \\

        \bottomrule
    \end{tabular}
    }

\end{table*}

\begin{figure*}[!t]
    \centering
    \includegraphics[width=\textwidth]{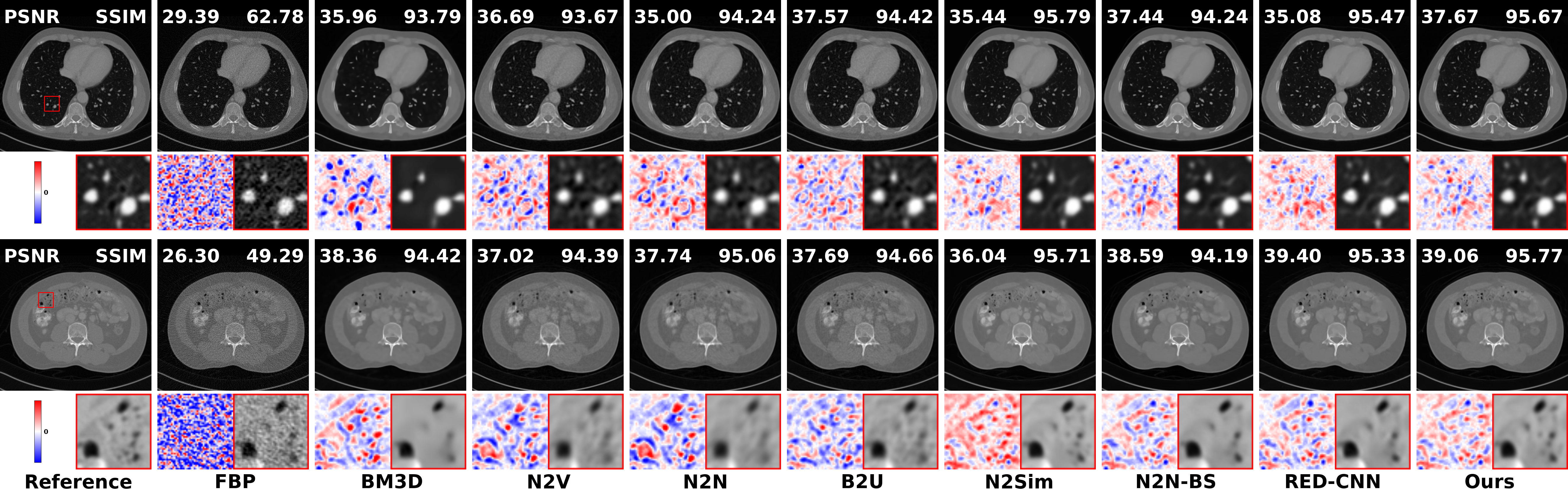}
    \caption{Representative denoising results on AAPM at $I_0=5\times10^{3}$ (0.5\% dose), with CT images, magnified ROIs, and error maps. ROIs are displayed using an HU window of $[-1000,1000]$ HU.}
    \label{fig:result_5000}
\end{figure*}

\begin{figure*}[htb]
    \centering
    \includegraphics[width=\textwidth]{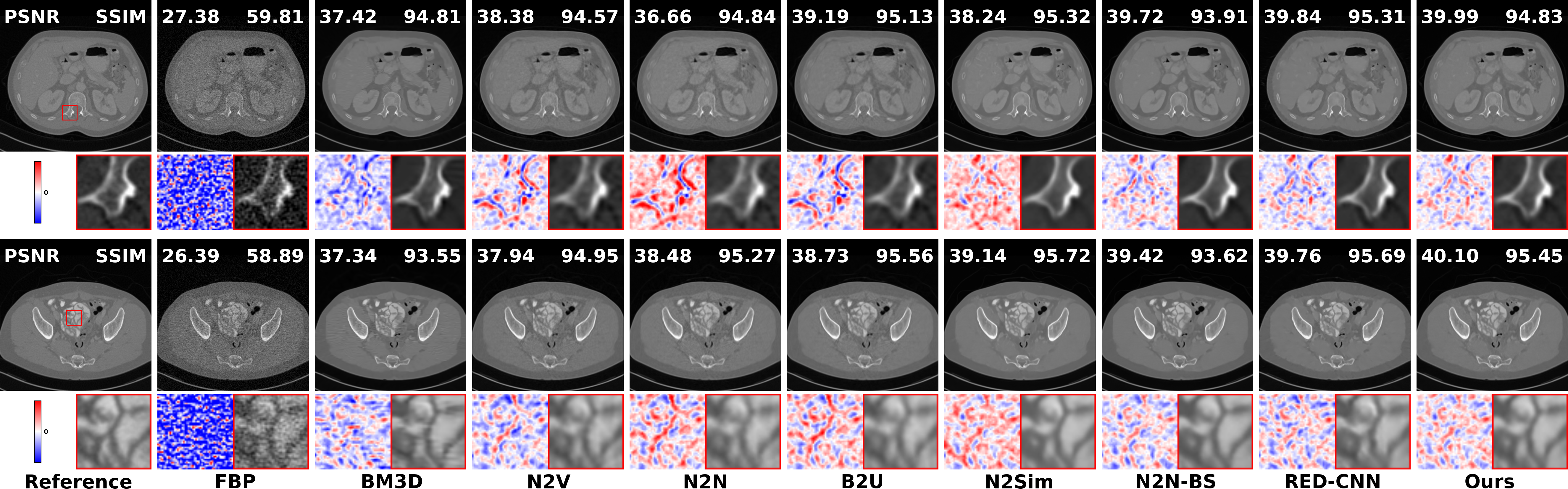}
    \caption{Representative denoising results on AAPM at $I_0=1\times10^{4}$ (1.0\% dose), with CT images, magnified ROIs, and error maps. ROIs are displayed using an HU window of $[-400,1600]$ HU.}
    \label{fig:result_10000}
\end{figure*}

\begin{figure*}[htb]
    \centering
    \includegraphics[width=\textwidth]{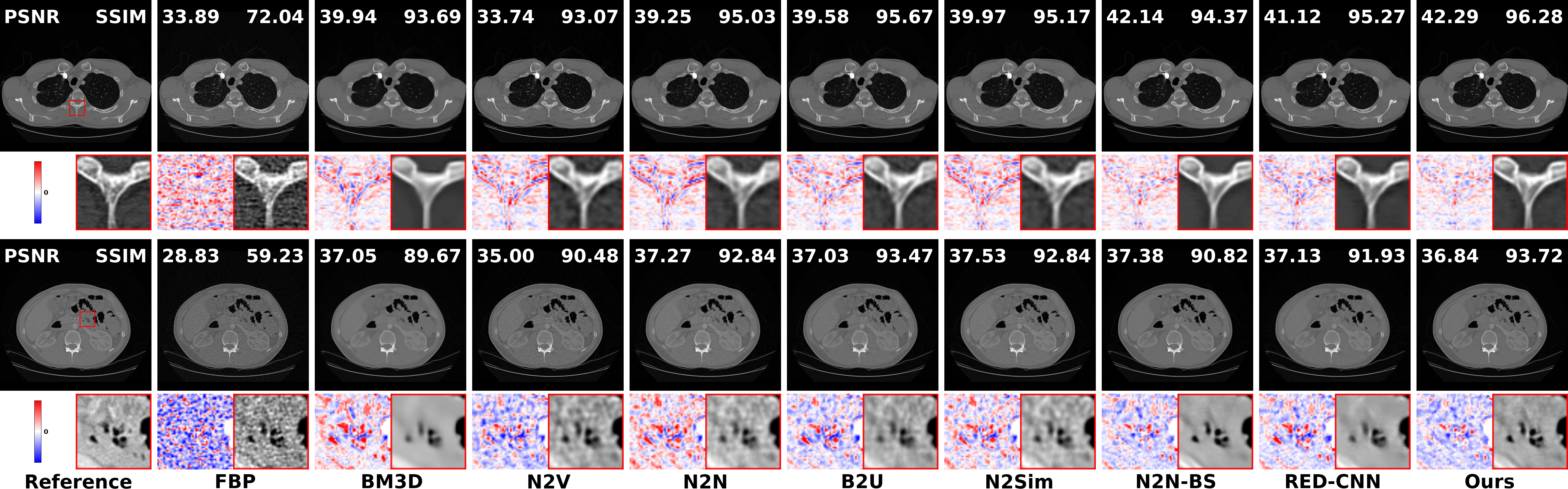}
    \caption{Representative denoising results on LIDC-IDRI at $I_0=5\times10^{3}$ (0.5\% dose), with CT images, magnified ROIs, and error maps. ROIs are displayed using an HU window of $[-400,1600]$ HU.}
    \label{fig:Li_result_5000}
\end{figure*}

\begin{figure*}[!t]
    \centering
    \includegraphics[width=\textwidth]{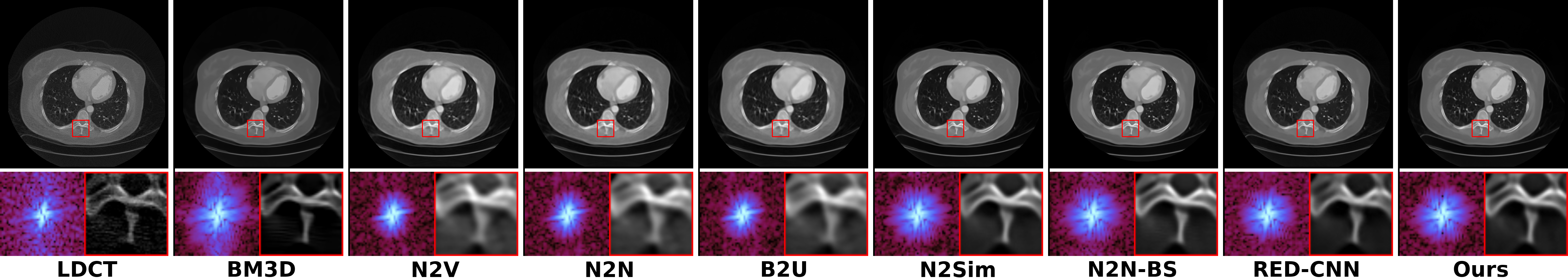}
    \caption{Representative denoising results on the GE Clinical Cardiac Dataset, with reconstructed images, local frequency spectra, and magnified ROIs. ROIs are displayed using an HU window of $[0,1600]$ HU.}
    \label{fig:result_GE}
\end{figure*}

\begin{figure*}[!t]
    \centering
    \includegraphics[width=\textwidth]{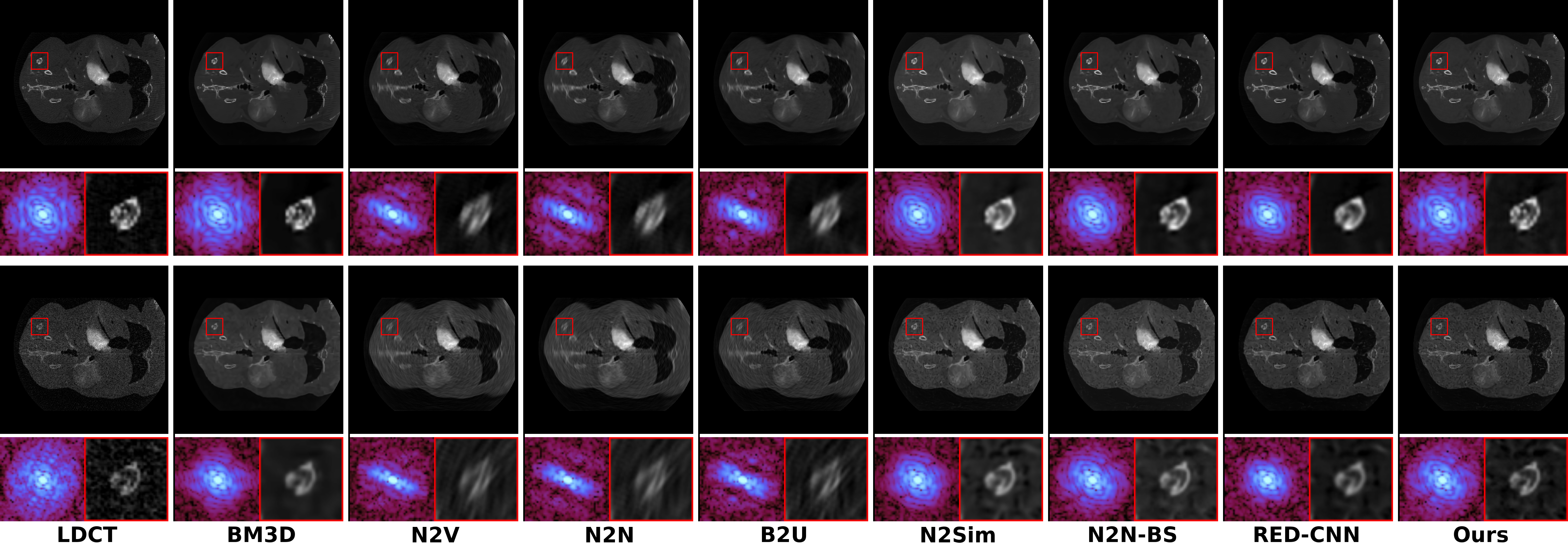}
    \caption{Representative denoising results on real low-dose mouse CT data, with reconstructed images, local frequency spectra, and magnified ROIs. ROIs are displayed using an HU window of $[0,1600]$ HU.}
    \label{fig:result_mouse}
\end{figure*}

\subsection{Experimental Settings}
\label{subsec:experimental_settings}

\subsubsection{Datasets}

Experiments were conducted on AAPM Mayo~\cite{mccollough2017lowdose}, LIDC-IDRI~\cite{armato2011lidc}, and LoDoPaB-CT~\cite{leuschner2021lodopab}. Patient-level splits contained 5,376 training and 560 test slices for AAPM and 5,733 training and 644 test slices for LIDC-IDRI. For cross-dataset evaluation, we applied the model trained on LIDC-IDRI to 700 LoDoPaB-CT images. Because raw projections were unavailable, we used ODL~\cite{adler2017odl} to forward project reference images in fan-beam geometry with 720 views and 512 detector channels. Measurements were simulated under the Poisson--Gaussian model at $I_0=5\times10^3$ (0.5\% dose) and $I_0=1\times10^4$ (1.0\% dose). 

Real-data experiments were conducted on the GE Clinical Cardiac Dataset and a mouse chest dataset. The GE dataset comprises high-resolution cardiac CT scans acquired with a 256-slice GE scanner under IRB approval at Vanderbilt University Medical Center and the University of Massachusetts Lowell. One clinical case with 984 projections over $360^\circ$ was evaluated. The detector contained 828 cells of $1.09\times1.09$~mm$^2$, with source-to-isocenter and source-to-detector distances of 625.61 and 1,097.6~mm, respectively. The mouse data were acquired at 90~kVp and 0.2~mA with 2,000 projections over $360^\circ$ using a photon-counting detector. The 20--90 and 60--90~keV energy windows were evaluated. The detector pixel size was $0.1\times0.1$~mm$^2$, with source-to-isocenter and source-to-detector distances of 76.28 and 361.1~mm, respectively. Each projection contained $2062\times252$ detector samples. The mouse data acquisition was approved by the Ethics Committee of the Li Ka Shing Faculty of Medicine, The University of Hong Kong (February 20, 2023).

\subsubsection{Competing Methods}

We compared FBP~\cite{kak2001principles}, BM3D~\cite{dabov2007bm3d}, supervised RED-CNN~\cite{chen2017redcnn}, and five self-supervised methods: Noise2Void (N2V)~\cite{krull2019noise2void}, Noise2Noise (N2N)~\cite{lehtinen2018noise2noise}, Blind2Unblind (B2U)~\cite{wang2022blind2unblind}, Noise2Sim (N2Sim)~\cite{niu2023noise2sim}, and photon-count-splitting N2N-BS~\cite{wang2025emulating}.

\subsubsection{Implementation Details}

All experiments were conducted on an NVIDIA RTX 5070 Ti GPU. Both networks used U-Net~\cite{ronneberger2015unet} and AdamW~\cite{loshchilov2019adamw} with a learning rate of $1.8\times10^{-4}$ and batch size 4. We set $\alpha=0.5$, $\sigma^2=5$, $\eta=1$, $\lambda_{\mathrm{iter}}=0.1$, $\alpha_{\mathrm{inf}}=1$, $\rho_h=1/4$, and $\rho_l=1/13$. During training, $\lambda_{\mathrm{vis}}$ increased from 2 to 13 and $\beta$ from 0.25 to 0.9. Each of five cross-domain iterations comprised 10 sinogram- and 20 image-domain epochs. The source code is available at \url{https://github.com/yqx7150/PIPG-LDCT}.

\subsubsection{Evaluation Metrics}

PSNR, SSIM~\cite{wang2004ssim}, and RMSE are reported as the mean $\pm$ standard deviation over the test set. Qualitative comparisons use identical CT windows, full slices, enlarged regions of interest (ROIs), and error maps to assess
artifact suppression and structural preservation.

\subsection{Experimental Results}
\label{subsec:experimental_results}

\subsubsection{Results on Simulated Data}

Table~\ref{tab:sim_results} indicates that the proposed method achieves the best PSNR, SSIM, and RMSE among the self-supervised methods on all datasets at both dose levels. Among these methods, N2Sim is sensitive to slice quality, whereas N2N-BS neglects the Gaussian noise component, resulting in degraded performance. The proposed method also outperforms supervised RED-CNN in 17 of 18 entries. Only AAPM SSIM at 1.0\% dose is slightly lower, at 95.72\% versus 95.91\%. 

Figs.~\ref{fig:result_5000}--\ref{fig:Li_result_5000} show that BM3D still leaves noticeable residual noise. N2V, N2N, and B2U further reduce the noise level. N2Sim, N2N-BS, and RED-CNN produce cleaner reconstructions overall, but exhibit oversmoothing in fine-detail recovery. In contrast, the proposed method suppresses noise and streak artifacts while better preserving low-contrast boundaries, high-frequency structures, and intensity transitions between tissues. The proposed method also produces more uniform error maps with fewer strong residual regions, particularly around high-contrast edges and low-contrast details.

\subsubsection{Results on Real Data}

Fig.~\ref{fig:result_GE} shows that BM3D still exhibits noticeable residual noise. N2V, N2N, and B2U introduce varying degrees of blurring around local edges and fine structures. N2Sim, N2N-BS, and RED-CNN improve visual quality, but some fine boundaries are still weakened. In contrast, the proposed method produces a cleaner and more uniform background while preserving structural edges and local intensity variations within the ROI. In the local frequency-domain amplitude spectrum, the proposed method suppresses disordered frequency components associated with random noise while retaining clearer high-frequency responses. Compared with the oversmoothed results, the high-frequency information is less attenuated. The consistent observations in the spatial and frequency domains indicate that the proposed method achieves a better balance between noise suppression and structural preservation under real acquisition noise. Fig.~\ref{fig:result_mouse} further demonstrates the stability and effectiveness of the proposed method on real low-dose mouse CT data. The consistent performance on cardiac and mouse CT data indicates its applicability to real low-dose CT data.

\begin{table}[htb]
\centering
\caption{Component ablation on AAPM at $I_0=1\times10^4$.}
\label{tab:ablation}
\renewcommand{\arraystretch}{1.12}
\setlength{\tabcolsep}{0pt}
\begin{tabular*}{\columnwidth}{@{\extracolsep{\fill}}lccc@{}}
\toprule
Variant &
PSNR (dB) $\uparrow$ &
SSIM (\%) $\uparrow$ &
RMSE $\downarrow$ \\
\midrule
w/o Img-domain    & 37.83$\pm$1.24 & 95.70$\pm$0.37 & 13.60$\pm$1.89 \\
w/o Posterior     & 39.17$\pm$0.93 & 94.20$\pm$0.70 & 11.18$\pm$1.19 \\
w/o Variance Matching & 39.39$\pm$1.09 & 95.11$\pm$0.80 & 10.94$\pm$1.37 \\
w/o Iteration     & 39.66$\pm$1.02 & 95.49$\pm$0.62 & 10.59$\pm$1.24 \\
\midrule
Full model         & \textbf{39.79$\pm$1.01} 
                   & \textbf{95.72$\pm$0.55} 
                   & \textbf{10.44$\pm$1.21} \\
\bottomrule
\end{tabular*}
\end{table}

\subsection{Ablation Study}
\label{subsec:ablation_study}

\begin{figure}[htb]
    \centering
    \includegraphics[width=\columnwidth]
    {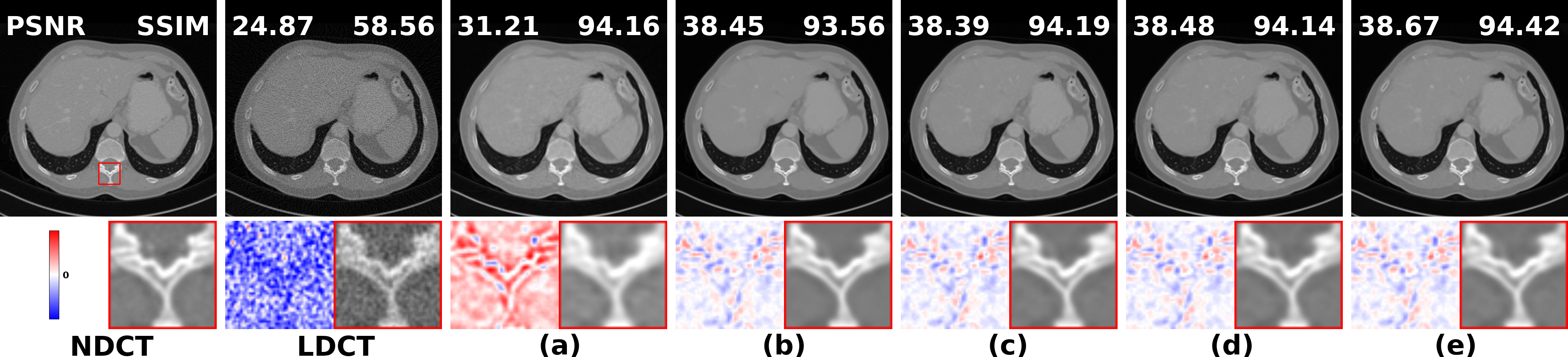}
    \caption{Qualitative component ablation on AAPM at $I_0=1\times10^4$. NDCT and LDCT denote the normal-dose reference and low-dose input. (a) w/o Img-domain. (b) w/o Posterior. (c) w/o Variance Matching. (d) w/o Iteration. (e) Full model. Each variant shows its reconstruction, magnified ROI, and error map.}
    \label{fig:altion_result_10000}
\end{figure}

\subsubsection{Component Ablation}

Table~\ref{tab:ablation} and Fig.~\ref{fig:altion_result_10000} assess the main components on AAPM at $I_0=1\times10^4$. Removing the image-domain network causes the largest decrease in PSNR and the largest increase in RMSE, indicating the importance of image-domain refinement. The \textit{w/o Posterior} variant omits latent count inference and explicit Gaussian noise separation, reducing it to the N2N-BS baseline. Results indicate direct Poisson splitting is unsuitable for mixed LDCT noise in this setting. Removing variance matching degrades all metrics, supporting the need to correct the noise mismatch induced by splitting. The \textit{w/o Iteration} variant retains sinogram-to-image processing. It removes image-to-sinogram feedback and prior updates. Its degradation supports cross-domain iteration.

\begin{figure}[!h]
    \centering
    \includegraphics[width=\columnwidth]{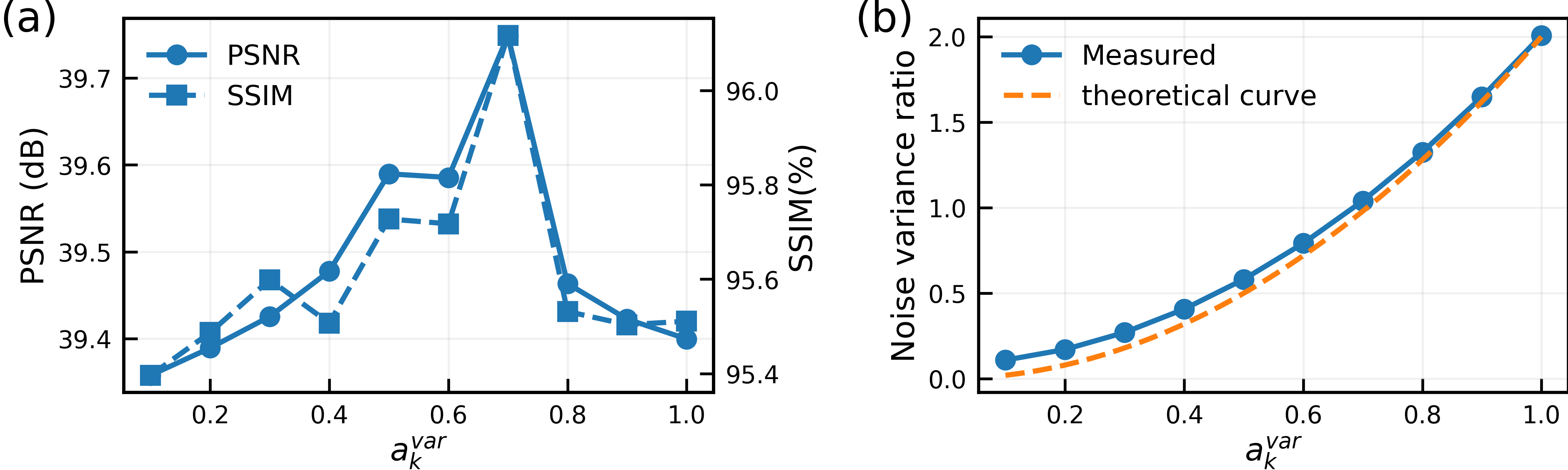}
    \caption{Variance-matching coefficient $\alpha_k^{\mathrm{var}}$ on AAPM at $I_0 = 1\times10^4$. (a) Reconstruction performance. (b) Measured and predicted ratios of branch noise variance to observation noise variance.}
    \label{fig:alpha_analysis}
\end{figure}

\subsubsection{Variance-Matching Analysis}

Fig.~\ref{fig:alpha_analysis}(a) shows that reconstruction performance peaks near $\alpha_k^{\mathrm{var}}=0.7$. Sec.~\ref{sec:posterior_compensation} also approximates the ratio of branch noise variance to observation noise variance as $(\alpha_k^{\mathrm{var}})^2/d_k$. Balanced splitting with $d_k=0.5$ therefore requires $\alpha_k^{\mathrm{var}}=\sqrt{0.5}\approx0.707$. In Fig.~\ref{fig:alpha_analysis}(b), the measured ratio closely follows the theoretical curve. The predicted matching point also lies in the region of optimal reconstruction. Agreement between the empirical optimum and analytical value supports the variance-matching mechanism.

\begin{table}[!h]
\centering
\caption{Effect of the thinning ratio $\alpha$ on AAPM at $I_0=1\times10^4$.}
\label{tab:splitting_ratio}
\renewcommand{\arraystretch}{1.12}
\setlength{\tabcolsep}{0pt}
\begin{tabular*}{\columnwidth}{@{\extracolsep{\fill}}cccc@{}}
\toprule
Branch fractions $(\alpha,1-\alpha)$  &
PSNR (dB) $\uparrow$ &
SSIM (\%) $\uparrow$ &
RMSE $\downarrow$ \\
\midrule
$(0.1,\,0.9)$ & 39.48$\pm$1.05 & 95.55$\pm$0.71 & 10.91$\pm$1.28 \\
$(0.3,\,0.7)$ & 39.67$\pm$0.99 & 95.68$\pm$0.58 & 10.60$\pm$1.21 \\
$(0.5,\,0.5)$ & \textbf{39.79$\pm$1.01} & \textbf{95.72$\pm$0.55} & \textbf{10.44$\pm$1.21} \\
\bottomrule
\end{tabular*}
\end{table}

\subsubsection{Effect of the Thinning Ratio}

Table~\ref{tab:splitting_ratio} shows that reconstruction performance is highest for the balanced split $(0.5,0.5)$ with equal branch fractions. Unequal splitting yields mismatched photon-count statistics and asymmetric branch predictions. Together with Fig.~\ref{fig:alpha_analysis}, this supports variance matching and balanced branches.

\begin{figure}[htb]
    \centering
    \includegraphics[width=\columnwidth]{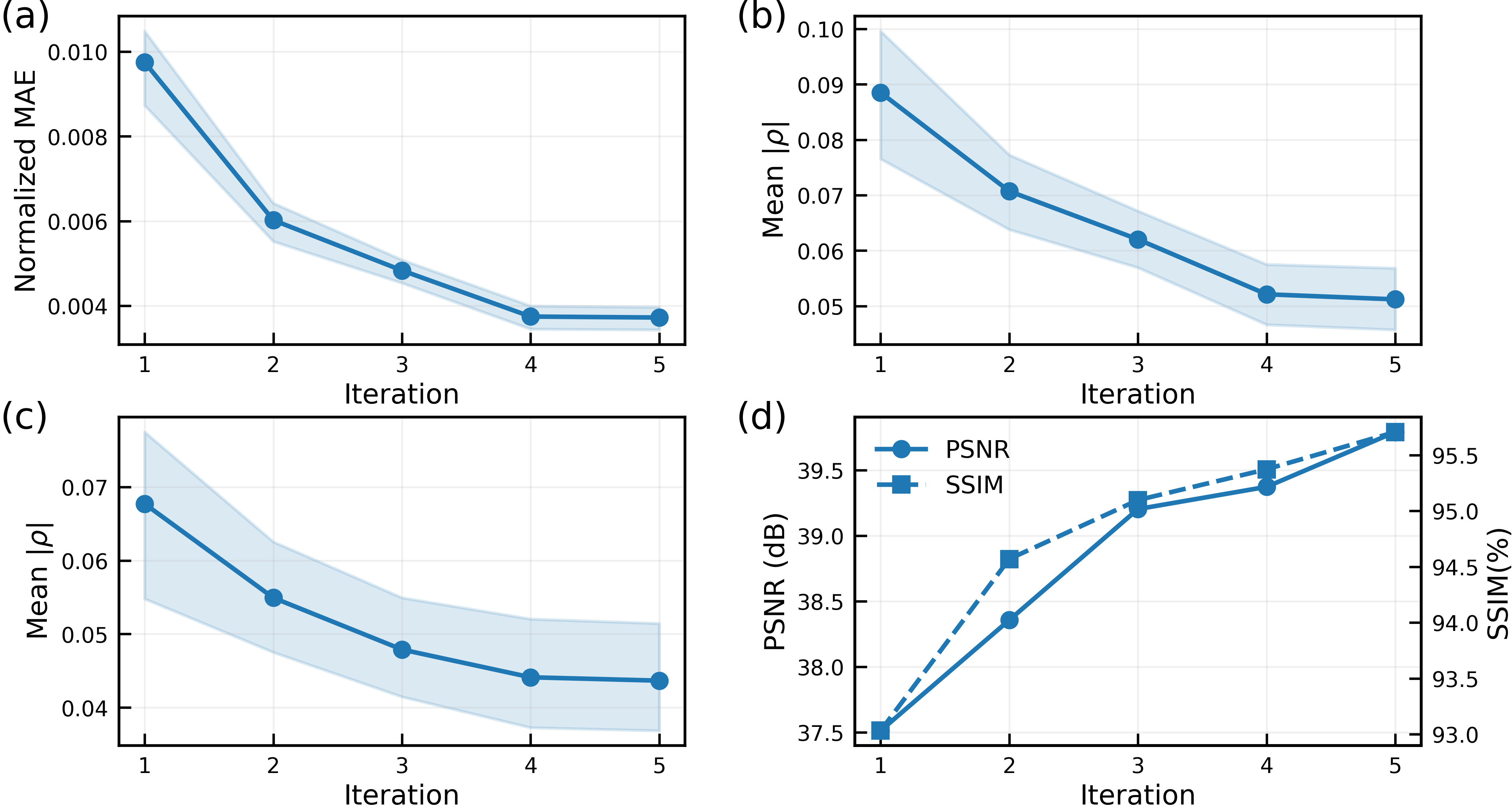}
    \caption{Effect of cross-domain iterative refinement on AAPM at $I_0=1\times10^4$. (a) Normalized sinogram MAE. (b)-(c) Mean absolute inter-branch residual correlation in the sinogram and image domains, respectively. (d) Reconstruction PSNR and SSIM across iterations.}
    \label{fig:iteration_analysis}
\end{figure}

\subsubsection{Cross-Domain Iterative Refinement}

Fig.~\ref{fig:iteration_analysis}(a) shows that normalized sinogram MAE decreases during iteration as image-domain feedback refines the prior. Mean absolute inter-branch residual correlation, computed relative to the clean references, also decreases in the sinogram and image domains in Figs.~\ref{fig:iteration_analysis}(b) and (c). This reduction indicates weaker empirical dependence between regenerated branches. PSNR and SSIM also increase in Fig.~\ref{fig:iteration_analysis}(d). Together, these trends link prior refinement and pair quality to improved reconstruction performance.

\section{Discussion}

The core of this work is to unify physics-driven training-pair construction with cross-domain iteration. Unlike methods that construct training pairs solely from the observed data, our method exploits the noise formation process in the projection domain to generate physically grounded training pairs, while continuously refining the sinogram prior using the restored image-domain results. By integrating these two components, training-pair construction is no longer limited to a static procedure based on fixed observations, but becomes a dynamic modeling process guided by imaging physics and progressively updated through cross-domain restoration, thereby extending the way training pairs can be constructed for self-supervised low-dose CT denoising.

Theoretical analysis further characterizes the conditions under which the proposed training-pair construction is valid, as well as its practical limitations. Under an ideal Poisson--Gaussian noise model and exact posterior sampling, the two generated branches can satisfy strict conditional independence. In practice, this ideal conditional independence may be compromised by posterior approximation and prior-estimation errors. When the actual CT acquisition process deviates substantially from the assumed Poisson--Gaussian model, for example due to scattering and beam-hardening effects, non-negligible residual correlations may remain between the two branches, which constitutes the main theoretical limitation of the current method. In addition, the present implementation is restricted to two-dimensional processing. 

Future work will investigate more comprehensive noise and acquisition models to better accommodate practical system deviations and explicitly account for model uncertainty in training-pair construction. Another promising direction is to integrate reconstruction and cross-domain refinement into a unified end-to-end or unrolled framework, allowing projection- and image-domain information to be jointly optimized while reducing the computational overhead of the current iterative procedure. We will also extend the present two-dimensional implementation to three-dimensional or multi-slice processing. Beyond LDCT, the proposed physics-driven pair-generation principle may be further generalized to other denoising or inverse problems governed by mixed Poisson--Gaussian noise.

\section{Conclusion}

This work proposed a self-supervised denoising method for low-dose CT that integrated physics-driven training-pair construction with cross-domain iteration to construct approximately independent training pairs from a single low-dose observation. Theoretical analysis established the ideal conditions under which the generated training pairs were conditionally independent and examined the effects of practical errors on their statistical properties. Experiments demonstrated the effectiveness of the proposed method and its potential for self-supervised low-dose CT denoising.

\appendix

To further characterize the ideal properties of the proposed pair construction and its deviations in practice, we present the following propositions and their proofs.

\begin{appproposition}
\label{prop:app_ideal}
Under the Poisson--Gaussian observation model, if
$C^* \sim p(C \mid Y,y)$, the two constructed raw branches satisfy $Y^{(1)} \perp Y^{(2)} \mid y$, and $\mathbb{E}\!\left[Y^{(k)} \mid y\right] = d_k \mu$, $\operatorname{Var}\!\left(Y^{(k)} \mid y\right)
= d_k(\mu+\sigma^2)$, where $d_1=\alpha$ and $d_2=1-\alpha$.
Furthermore, under the first-order logarithmic approximation,
$\operatorname{Var}\!\left(\tilde y^{(k)} \mid y\right)
\approx
\frac{1}{d_k}
\operatorname{Var}\!\left(\tilde y \mid y\right)$, and hence the variance-matching coefficient is $\alpha_k^{\mathrm{var}}=\sqrt{d_k}$.
\end{appproposition}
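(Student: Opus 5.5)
The key observation is that exact posterior sampling makes the pair $(C^*,\varepsilon^*)$ have the same joint law as the latent pair $(C,\varepsilon)$ given $y$. We therefore work with $(C^*,\varepsilon^*)$ as if they were the true components and reduce the statement to two standard data-thinning facts, one for the Poisson part and one for the Gaussian part.

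\textbf{Step 1: the reconstructed components have the model's joint law.} Conditional on $y$, the observation $Y$ has marginal law $p(Y\mid y)$. Drawing $C^*\sim p(C\mid Y,y)$ gives $(Y,C^*)\overset{d}{=}(Y,C)$ given $y$, since $p(Y\mid y)\,p(c\mid Y,y)=p(Y,c\mid y)$. Because $\varepsilon^*=Y-C^*$ is the same deterministic map applied to both, $(C^*,\varepsilon^*)\overset{d}{=}(C,\varepsilon)$. Hence, given $y$, $C^*\sim\operatorname{Poisson}(\mu)$, $\varepsilon^*\sim\mathcal N(0,\sigma^2)$, and the two are independent. I regard this as the main conceptual step. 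The obstacle is that $Y$ is only marginalized and not held fixed, so every subsequent claim has to be read as a statement conditional on $y$ alone, with $Y$ integrated out.

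\textbf{Step 2: thinning each component.} For the Poisson part, binomial thinning of a $\operatorname{Poisson}(\mu)$ variable gives $C^{(1)}\sim\operatorname{Poisson}(\alpha\mu)$ and $C^{(2)}\sim\operatorname{Poisson}((1-\alpha)\mu)$, and the two are independent. This follows by factorizing the joint pmf, or via generating functions. For the Gaussian part, the pair $(\varepsilon^{(1)},\varepsilon^{(2)})$ is a linear image of the independent Gaussians $(\varepsilon^*,Z)$, so it is jointly Gaussian. We compute
\[
\operatorname{Var}\bigl(\varepsilon^{(1)}\bigr)=(\alpha^2+\tau^2)\sigma^2=\alpha\sigma^2,
\qquad
\operatorname{Var}\bigl(\varepsilon^{(2)}\bigr)=(1-\alpha)\sigma^2,
\]
\[
\operatorname{Cov}\bigl(\varepsilon^{(1)},\varepsilon^{(2)}\bigr)=\bigl(\alpha(1-\alpha)-\tau^2\bigr)\sigma^2=0,
\]
using $\tau^2=\alpha(1-\alpha)$. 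Zero covariance between jointly Gaussian variables gives independence. The binomial draw depends only on $C^*$, while $Z$ is independent of everything else, so the four variables $C^{(1)},C^{(2)},\varepsilon^{(1)},\varepsilon^{(2)}$ are mutually independent given $y$. Consequently $Y^{(1)}=C^{(1)}+\varepsilon^{(1)}$ and $Y^{(2)}=C^{(2)}+\varepsilon^{(2)}$ are independent given $y$. Their moments follow directly: $\mathbb E[Y^{(k)}\mid y]=d_k\mu$ and $\operatorname{Var}(Y^{(k)}\mid y)=d_k\mu+d_k\sigma^2=d_k(\mu+\sigma^2)$.

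\textbf{Step 3: the log-domain variance.} Apply the delta method to $\tilde y=-\log(Y/I_0)$ around $\mathbb E[Y\mid y]=\mu$. This gives
\[
\operatorname{Var}(\tilde y\mid y)\approx\frac{\mu+\sigma^2}{\mu^2}.
\]
Similarly, $\tilde y^{(k)}=-\log\bigl(Y^{(k)}/(d_kI_0)\bigr)$ has mean argument $d_k\mu$, so
\[
\operatorname{Var}\bigl(\tilde y^{(k)}\mid y\bigr)\approx\frac{d_k(\mu+\sigma^2)}{d_k^2\mu^2}=\frac{1}{d_k}\operatorname{Var}(\tilde y\mid y).
\]
The normalization by $d_k$ also centers the branch at the same line integral $y$ to first order. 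Finally, treat $\hat y_{\mathrm{pc}}$ as fixed, since under the ideal assumption it is deterministic given $y$. Then the residual scaling in \eqref{eq:branch_sinogram_correction} multiplies the conditional variance by $(\alpha_k^{\mathrm{var}})^2$. Setting $(\alpha_k^{\mathrm{var}})^2/d_k=1$ yields $\alpha_k^{\mathrm{var}}=\sqrt{d_k}$.
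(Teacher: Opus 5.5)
Your proof is correct and follows essentially the same route as the paper. Both use the Bayes-rule identity giving $(C^*,\varepsilon^*)\overset{d}{=}(C,\varepsilon)$ given $y$, then binomial thinning plus the zero-covariance jointly Gaussian split, independence of the Poisson and Gaussian blocks via independent auxiliary randomness, and the first-order delta method. Your remark that $\hat y_{\mathrm{pc}}$ is deterministic given $y$ in the ideal case (indeed $\hat y_{\mathrm{pc}}=y$ when the posterior uses the true $\mu$) justifies why residual scaling multiplies the variance by $(\alpha_k^{\mathrm{var}})^2$, which the paper leaves implicit.
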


\begin{IEEEproof}
Conditional on $y$, $C\sim\operatorname{Poisson}(\mu)$ and
$\varepsilon\sim\mathcal{N}(0,\sigma^2)$ are independent. Let
$\varepsilon^*=Y-C^*$. Since $C^*$ is drawn from $p(C\mid Y,y)$, Bayes' rule gives
\begin{equation}
f_{Y\mid y}(c+e)p(C=c\mid Y=c+e,y)
=
p(C=c\mid y)f_{\varepsilon}(e).
\end{equation}
Thus, $(C^*,\varepsilon^*)\mid y\overset{d}{=}(C,\varepsilon)\mid y$, so
$C^*\sim\operatorname{Poisson}(\mu)$,
$\varepsilon^*\sim\mathcal{N}(0,\sigma^2)$, and
$C^*\perp\varepsilon^*\mid y$.

For the Poisson component, binomial thinning yields
$C^{(k)}\sim\operatorname{Poisson}(d_k\mu)$ and
$C^{(1)}\perp C^{(2)}\mid y$.

For the Gaussian component, with
$\tau=\sqrt{\alpha(1-\alpha)}$,
\begin{equation}
\operatorname{Cov}(\varepsilon^{(1)},\varepsilon^{(2)}\mid y)
=
[\alpha(1-\alpha)-\tau^2]\sigma^2
=
0.
\end{equation}
Since the pair is jointly Gaussian, this implies
$\varepsilon^{(1)}\perp\varepsilon^{(2)}\mid y$.

Since the Poisson and Gaussian splitting procedures use independent
auxiliary randomness and $C^*\perp\varepsilon^*\mid y$,
$(C^{(1)},C^{(2)})\perp(\varepsilon^{(1)},\varepsilon^{(2)})\mid y$.
Hence, for $Y^{(k)}=C^{(k)}+\varepsilon^{(k)}$,
$Y^{(1)}\perp Y^{(2)}\mid y$. Moreover,
\begin{equation}
\mathbb{E}[Y^{(k)}\mid y]=d_k\mu,
\qquad
\operatorname{Var}(Y^{(k)}\mid y)=d_k(\mu+\sigma^2).
\end{equation}

For the logarithmic transform, let
$g_k(t)=-\log[t/(d_kI_0)]$. Since $\mu=I_0\exp(-y)$,
$g_k(d_k\mu)=y$ and $g_k'(d_k\mu)=-1/(d_k\mu)$.
Applying the first-order delta method yields
\begin{equation}
\begin{aligned}
\mathbb{E}[\tilde{y}^{(k)}\mid y]
&\approx y,\\
\operatorname{Var}(\tilde{y}^{(k)}\mid y)
&\approx
[g_k'(d_k\mu)]^2\operatorname{Var}(Y^{(k)}\mid y)
=
\frac{\mu+\sigma^2}{d_k\mu^2}.
\end{aligned}
\end{equation}

Similarly, for $\tilde{y}=-\log(Y/I_0)$,
$\operatorname{Var}(\tilde{y}\mid y)\approx(\mu+\sigma^2)/\mu^2$.
Hence,
\begin{equation}
\operatorname{Var}(\tilde{y}^{(k)}\mid y)
\approx
\frac{1}{d_k}\operatorname{Var}(\tilde{y}\mid y).
\end{equation}

Thus, variance matching requires
$(\alpha_k^{\mathrm{var}})^2/d_k=1$, yielding
$\alpha_k^{\mathrm{var}}=\sqrt{d_k}$.
\IEEEQEDoff
\end{IEEEproof}

\begin{appproposition}
\label{prop:app_practical}
Let $\Pi_Y$ and $Q_Y$ be the exact and practical posteriors, and $d_{\mathrm{TV}}$ the total variation distance. Define
\begin{equation}
\varepsilon_{\mathrm{post}}(y)
=
\mathbb{E}_{Y\mid y}
\left[
d_{\mathrm{TV}}(Q_Y,\Pi_Y)
\right].
\label{eq:app_posterior_error}
\end{equation}
Let $P_Q^{12}$ and $P_\Pi^{12}$ be the joint raw-branch distributions induced by $Q_Y$ and $\Pi_Y$, with marginals $P_Q^k$ and $P_\Pi^k$, $k=1,2$. Then
\begin{equation}
d_{\mathrm{TV}}
\left(
P_Q^{12},P_Q^1\otimes P_Q^2
\right)
\le
3\varepsilon_{\mathrm{post}}(y).
\label{eq:app_tv_bound}
\end{equation}
Moreover, with $m_Q(y)=\mathbb{E}[C^*\mid y]$,
\begin{equation}
\operatorname{Cov}(Y^{(1)},Y^{(2)}\mid y)
=
\alpha(1-\alpha)[\mu-m_Q(y)].
\label{eq:app_raw_cov}
\end{equation}

After variance matching, let
$r_k=\tilde y^{(k)}-y$,
$e_{\mathrm{pc}}=\hat y_{\mathrm{pc}}-y$,
$a_k=\alpha_k^{\mathrm{var}}$, and denote their conditional standard
deviations by $\sigma_k$ and $\sigma_{\mathrm{pc}}$, respectively. Then
\begin{equation}
\begin{aligned}
\left|
\operatorname{Cov}
(\tilde y_{\mathrm{vm}}^{(1)},
 \tilde y_{\mathrm{vm}}^{(2)}
 \mid y)
\right|
&\le
a_1a_2
|\operatorname{Cov}(r_1,r_2\mid y)| \\
&\quad+
\sigma_{\mathrm{pc}}
[a_1(1-a_2)\sigma_1
+(1-a_1)a_2\sigma_2] \\
&\quad+
(1-a_1)(1-a_2)\sigma_{\mathrm{pc}}^2.
\end{aligned}
\label{eq:app_vm_bound}
\end{equation}
\end{appproposition}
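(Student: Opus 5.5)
We argue in three parts, matching the three claims of Proposition~\ref{prop:app_practical}. All distributions are conditional on $y$. Write the pair construction as a Markov kernel $K$ that maps a posterior draw $C^*$ (together with $Y$, hence $\varepsilon^*=Y-C^*$) to $(Y^{(1)},Y^{(2)})$ using auxiliary randomness that does not depend on which posterior was used. Then
\[
P_Q^{12}=\int K\,Q_Y\,dP(Y\mid y),\qquad P_\Pi^{12}=\int K\,\Pi_Y\,dP(Y\mid y).
\]
\emph{Step 1 (TV bound).} By joint convexity of $d_{\mathrm{TV}}$ and its contraction under Markov kernels (data processing),
\[
d_{\mathrm{TV}}(P_Q^{12},P_\Pi^{12})\le \mathbb{E}_{Y\mid y}\,d_{\mathrm{TV}}(Q_Y,\Pi_Y)=\varepsilon_{\mathrm{post}}(y).
\]
Marginalization is also a kernel, so $d_{\mathrm{TV}}(P_Q^k,P_\Pi^k)\le\varepsilon_{\mathrm{post}}(y)$ for $k=1,2$. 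For product measures, coupling each factor separately and applying a union bound gives
\[
d_{\mathrm{TV}}(P_Q^1\otimes P_Q^2,P_\Pi^1\otimes P_\Pi^2)\le d_{\mathrm{TV}}(P_Q^1,P_\Pi^1)+d_{\mathrm{TV}}(P_Q^2,P_\Pi^2).
\]
Proposition~\ref{prop:app_ideal} gives $P_\Pi^{12}=P_\Pi^1\otimes P_\Pi^2$. The triangle inequality
\[
d_{\mathrm{TV}}(P_Q^{12},P_Q^1\otimes P_Q^2)\le d_{\mathrm{TV}}(P_Q^{12},P_\Pi^{12})+d_{\mathrm{TV}}(P_\Pi^1\otimes P_\Pi^2,P_Q^1\otimes P_Q^2)
\]
then yields the bound $3\varepsilon_{\mathrm{post}}(y)$.

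\emph{Step 2 (raw covariance).} Use the law of total covariance, conditioning on $(Y,C^*)$. Given $C^*$, binomial thinning gives
\[
\operatorname{Cov}(C^{(1)},C^{(2)}\mid C^*)=-\alpha(1-\alpha)C^*.
\]
With $\varepsilon^*=Y-C^*$ fixed, the Gaussian branches contribute $-\tau^2\sigma^2$ from $Z$. The conditional means are $\mathbb{E}[Y^{(1)}\mid\cdot]=\alpha Y$ and $\mathbb{E}[Y^{(2)}\mid\cdot]=(1-\alpha)Y$, because $C^{(1)}+\varepsilon^{(1)}$ has mean $\alpha C^*+\alpha\varepsilon^*=\alpha Y$. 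The covariance of these means is therefore $\alpha(1-\alpha)\operatorname{Var}(Y\mid y)=\alpha(1-\alpha)(\mu+\sigma^2)$. Taking expectations, the inner covariance term is $-\alpha(1-\alpha)\bigl(m_Q(y)+\sigma^2\bigr)$, using $\tau^2=\alpha(1-\alpha)$. The $\sigma^2$ terms cancel, leaving $\alpha(1-\alpha)[\mu-m_Q(y)]$.

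The point to check is that the $\tau Z$ covariance is $-\tau^2\sigma^2$, where $Z$ has variance $\sigma^2$ and is independent of everything else. Note also that cross terms between the count thinning and $Z$ vanish because their randomness is independent.

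\emph{Step 3 (variance-matched bound).} Write
\[
\tilde y_{\mathrm{vm}}^{(k)}-y=a_k r_k+(1-a_k)e_{\mathrm{pc}},
\]
which follows from Eq.~\eqref{eq:branch_sinogram_correction}. Expand the covariance bilinearly into four terms:
\[
a_1a_2\operatorname{Cov}(r_1,r_2),\quad a_1(1-a_2)\operatorname{Cov}(r_1,e_{\mathrm{pc}}),\quad (1-a_1)a_2\operatorname{Cov}(e_{\mathrm{pc}},r_2),\quad (1-a_1)(1-a_2)\operatorname{Var}(e_{\mathrm{pc}}).
\]
Apply the triangle inequality and Cauchy--Schwarz to each cross term, bounding $|\operatorname{Cov}(r_k,e_{\mathrm{pc}})|$ by $\sigma_k\sigma_{\mathrm{pc}}$. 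This requires $a_k\in[0,1]$, which holds since $a_k=\sqrt{d_k}$, so that $1-a_k\ge0$ and no absolute values are needed on the coefficients.

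The main obstacle is Step 1. The product-measure TV inequality and the exact kernel description must be stated cleanly. In particular, $\varepsilon^*$ depends on $Y$ as well as $C^*$, so the kernel must take $(Y,C^*)$ as input, and the posterior discrepancy enters only through the second coordinate. That dependence is compatible with the averaging in $\varepsilon_{\mathrm{post}}$.
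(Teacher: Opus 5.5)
Your proposal is correct and follows the paper's proof essentially step for step. The shared steps are: TV contraction under the posterior-sampling-plus-splitting Markov kernel, combined with the triangle and product-measure inequalities and the factorization from Proposition~\ref{prop:app_ideal}; the law of total covariance conditioned on $(C^*,\varepsilon^*)$; and bilinear expansion of $a_k r_k+(1-a_k)e_{\mathrm{pc}}$ with Cauchy--Schwarz. Your two extra remarks make explicit points the paper leaves implicit: the kernel must take $(Y,C^*)$ as input, and the stated coefficients in Eq.~\eqref{eq:app_vm_bound} need $a_k\in[0,1]$, which holds for $a_k=\sqrt{d_k}$.
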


\begin{IEEEproof}
Posterior sampling followed by stochastic splitting defines a Markov
kernel. By contraction of total variation,
$d_{\mathrm{TV}}(P_Q^{12},P_\Pi^{12})
\le\varepsilon_{\mathrm{post}}(y)$ and
$d_{\mathrm{TV}}(P_Q^k,P_\Pi^k)
\le\varepsilon_{\mathrm{post}}(y)$.
Proposition~\ref{prop:app_ideal} gives
$P_\Pi^{12}=P_\Pi^1\otimes P_\Pi^2$; hence the triangle inequality and
the product-measure inequality yield
Eq.~\eqref{eq:app_tv_bound}.

Conditioned on $(C^*,\varepsilon^*)$,
\begin{equation}
\operatorname{Cov}(Y^{(1)},Y^{(2)}\mid C^*,\varepsilon^*)
=
-\alpha(1-\alpha)(C^*+\sigma^2),
\end{equation}
while the conditional means are $\alpha Y$ and $(1-\alpha)Y$.
The law of total covariance therefore gives
\begin{equation}
\begin{aligned}
\operatorname{Cov}(Y^{(1)},Y^{(2)}\mid y)
&=
\alpha(1-\alpha)
[\operatorname{Var}(Y\mid y)-m_Q(y)-\sigma^2] \\
&=
\alpha(1-\alpha)[\mu-m_Q(y)],
\end{aligned}
\end{equation}
which proves Eq.~\eqref{eq:app_raw_cov}.

From Eq.~\eqref{eq:branch_sinogram_correction},
$\tilde y_{\mathrm{vm}}^{(k)}-y
=a_kr_k+(1-a_k)e_{\mathrm{pc}}$.
Expanding the conditional covariance and applying the
Cauchy--Schwarz inequality gives Eq.~\eqref{eq:app_vm_bound}.
\IEEEQEDoff
\end{IEEEproof}

\bibliographystyle{IEEEtran}
\bibliography{references}

\end{document}